\documentclass{article}

\usepackage{spconf}
\usepackage{amsmath,amssymb,amsfonts,mathtools,bm}
\usepackage{graphicx}
\usepackage{xcolor}
\usepackage{cite}

\usepackage{tabularx}
\usepackage{booktabs}
\usepackage{array}
\newcolumntype{M}[1]{>{\centering\arraybackslash}m{#1}}

\usepackage[linesnumbered,ruled,vlined]{algorithm2e}

\usepackage{subcaption}

\usepackage{textcomp}
\usepackage{comment}

\usepackage{amsthm}

\newtheorem{theorem}{Theorem}
\newtheorem{lemma}{Lemma}

\theoremstyle{definition}
\newtheorem{definition}{Definition}
\newtheorem{remark}{Remark}

\graphicspath{{./images/}}

\usepackage{hyperref}

\title{\lowercase{k}VNN: Learnable Multi-Kernel Volterra Neural Networks}
\name{Haoyu Yun\textsuperscript{1}, Hamid Krim\textsuperscript{1}, Yufang Bao\textsuperscript{2}\thanks{Thanks to the ARO for funding (Grant No.~W911NF2410329).}}
\address{%
\textsuperscript{1}Department of Electrical and Computer Engineering, North Carolina State University, Raleigh, NC, USA\\
\textsuperscript{2}Department of Mathematics and Computer Science, Fayetteville State University, Fayetteville, NC, USA
}

\begin{document}

\maketitle

 \begin{abstract}
Higher-order learning is fundamentally rooted in exploiting compositional features. It clearly hinges on enriching the representation by more elaborate interactions of the data which, in turn, tends to increase the model complexity of conventional large-scale deep learning models. In this paper, a kernelized Volterra Neural Network (kVNN) is proposed. The key to the achieved efficiency lies in using a learnable multi-kernel representation, where different interaction orders are modeled by distinct polynomial-kernel components with compact, learnable centers, yielding an order-adaptive parameterization. Features are learned by the composition of layers, each of which consists of parallel branches of  different polynomial orders, enabling kVNN filters to directly replace standard convolutional kernels within existing architectures. The theoretical results are substantiated by experiments on two representative tasks: video action recognition and image denoising. The results demonstrate favorable performance-efficiency trade-offs: kVNN consistently yields reduced model (parameters) and computational (GFLOPs) complexity with competitive and often improved performance. These results are maintained even when trained from scratch without large-scale pretraining. In summary, we substantiate that structured kernelized higher-order layers offer a practical path to balancing expressivity and computational cost in modern deep networks.

\end{abstract}

\section{Introduction}
Deep neural networks have achieved remarkable performance on a wide range of vision and video tasks, yet their core building blocks remain largely linear (e.g., convolutions or linear projections), with nonlinearity primarily introduced through pointwise activations and depth. Higher-order interactions in data, which explicitly capture multiplicative couplings among features and enable richer nonlinear structures, are typically synthesized only across layers, thus increasing model complexity. Deploying higher-order modeling as a reusable and replaceable layer in large-scale deep models is, however, still challenging, since the parameter count and computational cost often grow rapidly with the interaction order and the input dimension \cite{Roheda_Krim_2020}.

Kernel methods, well-known in signal processing and statistics, provide a principled way to capture higher-order interactions: by representing polynomial-type interactions implicitly in a reproducing kernel Hilbert space (RKHS), one obviates explicit construction of high-dimensional monomial features while retaining a clear functional interpretation \cite{6796712}. Nevertheless, classical kernel regression typically relies on sample-based expansions, whose complexity scales with the number of training points, leading to substantial storage and computation in modern large-scale learning. While many scalable approximations have been developed~\cite{JMLR:v6:drineas05a}, effective integration of kernelized higher-order representations into deep networks via a modular, end-to-end trainable, and budget-controlled manner remains elusive.

In this paper, we propose a kernelized higher-order filtering layer tailored for deep models, translating the expressivity of higher-order (Volterra-type) interactions into a practical network module. The key idea is to characterize the underlying high-order basis functions through a multi-kernel representation: different interaction orders are modeled by different kernel components and are parameterized in a structured, learnable way, thereby balancing expressivity and complexity. The resulting layer is designed as a plug-and-play module, enabling straightforward replacement of standard convolutional filters for end-to-end training within existing architectures.

The main contributions herein are threefold. (i) We introduce a multi-kernel representation for higher-order polynomial (Volterra-type) filtering, providing a structured functional basis that separates interaction orders. (ii) We instantiate this representation as a practical and modular network layer that can directly replace standard convolutional filters and allows end-to-end training. (iii) We validate the proposed design on two representative tasks---video action recognition and image denoising---demonstrating favorable accuracy--efficiency trade-offs under controlled complexity.

The rest of the paper is organized as follows:  Section~\ref{background} reviews background on higher-order modeling, the kernelization perspective, and higher-order interactions in deep networks. Section~\ref{method} presents our proposed methodology, including the learnable multi-kernel representation and the kVNN layer construction. Section~\ref{experiments} reports experimental results on video action recognition and image denoising. Section~\ref{conclusion} provides conclusions and an outlook on future directions.

\section{Background}
\label{background}

\subsection{Volterra Filtering}
\label{sec:bg_volterra}

A nonlinear Volterra system response extends linear convolution to include nonlinear terms, resulting in the continuous time $p$-th order Volterra series  defined as:
\begin{equation}
\label{eq:volterraconv}
f(x) =H_0x(t) +H_1x(t) +H_2x(t)+\cdots +H_px(t),
\end{equation}
where $H_0x(t)=const.$ (it is set as $0$ by default), and
\begin{equation*}
\label{eq:volterra_ser}
H_r x(t)  \!= \!\!\int!\!\! h_{r} (\!s_1,\! \cdots\!,\! s_r) x(t\!-\!s_1)\cdots x(t\!-\!s_r) ds_1\!\cdots \! ds_r.
\end{equation*}
In this work, we use the standard truncated discrete/vectorized form. Let $x \in \mathbb{R}^{d}$ denote a finite input of $x(t)$  (e.g., a vectorized signal or an image patch). Discretizing equation~\eqref{eq:volterraconv} yields $p$-th order  Volterra mapping:
\begin{equation}
\label{eq:volterra_trunc_vec}
\begin{aligned}
f(x)
&= \sum_{r=1}^{p} f_r(x),\\
f_r(x)
&= \sum_{i_1=1}^{d}\cdots\sum_{i_r=1}^{d}
h_r(i_1,\ldots,i_r)\,x_{i_1}\cdots x_{i_r}.
\end{aligned}
\end{equation}
Here, $h_r$ is an $r$-order coefficient tensor that captures $r$-way interactions among input coordinates. This formulation naturally generalizes linear filtering ($r=1$) by explicitly modeling higher-order correlations via $r\ge 2$ terms. Such higher-order filtering has recently been revisited in modern deep learning via Volterra Neural Networks~\cite{Roheda_Krim_2020} to develop principled higher-order neural layers and discuss their representation and complexity properties.

A recognized challenge of Volterra filtering is the rapidly increasing complexity: the number of free coefficients in $h_r$ scales as $\mathcal{O}(d^r)$ in the general case (and remains polynomially large even under symmetry/structure constraints). As a result, directly using high-order Volterra filters in deep architectures can be computationally demanding when $d$ or the interaction order $r$ grows. Structured parameterizations that preserve the expressive benefits of higher-order interactions while controlling parameter and computational costs are clearly of interest and constitute the central goal of the following development.

\subsection{Kernelization View of Volterra Modeling}
\label{subsec:kernelization_background}
As noted, seeking  high-order Volterra-series-based modeling entails quickly  increased model and computational complexities  for increasing input dimensions. Kernel methods, well known for their efficiency, are proposed as a path towards alleviating this difficulty by implicitly representing polynomial interactions in a reproducing kernel Hilbert space (RKHS), and can indeed avoid an explicit enumeration of high-order coefficients~\cite{6796712}. In particular, for nonlinear mappings that can be expressed (or well-approximated) by truncated polynomials, a polynomial kernel induces an RKHS in which these nonlinear effects correspond to linear functionals.

Formally, let $K:\mathbb{R}^d\times\mathbb{R}^d\to\mathbb{R}$ be a positive semidefinite kernel.
Then $K$ admits an inner-product feature representation (Theorem~\ref{thm:map}, Appendix):
\begin{equation}
\label{eq:rkhs_featuremap_bg}
K(x,x')=\langle \phi(x),\phi(x')\rangle_{\mathcal{F}}.
\end{equation}
To capture polynomial interactions, a standard choice is the degree-$r$ homogeneous polynomial kernel
\begin{equation}
\label{eq:poly_kernel_bg}
K_r(x,x')\triangleq (x^\top x')^{r}.
\end{equation}

The RKHS induced by \(K_r(\,\cdot\,,\,\cdot\,)\) includes (finite-dimensional) degree-\(r\) homogeneous polynomials. Consequently, a truncated polynomial nonlinearity of order up to \(p\) can be interpreted as a function in a union of order-specific polynomial subspaces, where each order \(r\) corresponds to a distinct subspace. This RKHS perspective is particularly useful for Volterra modeling because it connects high-order interactions to linear operations in feature space: learning can be pursued via inner products \(x^\top x'\) (or kernel evaluations \(K_r(x,x')\)) without explicitly forming monomial features.

In classical kernel regression, the representer theorem~\cite{10.5555/648300.755324} further states that solutions of many regularized empirical risk minimization problems admit a finite expansion over the training samples. Concretely, for a broad class of objective functions, the learned function can be written as
\begin{equation}
\label{eq:representer_bg}
f(x)=\sum_{j=1}^{N}\gamma_j\,K(x,x_j),\qquad \gamma_j\in\mathbb{R},
\end{equation}
where \(\{x_j\}_{j=1}^{N}\) are the training inputs. Equation~\eqref{eq:representer_bg} highlights that kernel learning can be implemented through kernel evaluations and a set of coefficients \(\{\gamma_j\}\), providing a convenient computational form for estimating nonlinear mappings.

Despite these advantages, directly adopting classical kernel regression formulations in modern deep learning pipelines remains nontrivial. The sample-based expansion in Equation~\eqref{eq:representer_bg} scales with the number of training points \(N\), leading to unreasonably large storage and computational costs in large-scale settings; accordingly, sparser or lower-complexity approximations such as Nystr\"om/column-subsampling methods~\cite{NIPS2015_f3f27a32,10.1016/j.inffus.2015.03.001}, random feature embeddings~\cite{JMLR:v21:19-1031}, and scalable training schemes for large-scale kernel learning~\cite{NIPS2014_c6cc81e8} have been developed, yet their integration into deep networks still faces approximation--accuracy trade-offs and limited flexibility under fixed budgets. Moreover, while polynomial kernels provide a principled way to model high-order interactions, deep networks typically favor modular and parameter-efficient designs. In contrast to commonly adopted local architectural fine-tuning and partial mitigations to pursue improvement, we revisit the non-linear approximation in learning, by way of a multiple-kernel data representation. In this context, multiple kernel learning (MKL) provides a complementary direction by combining base kernels and learning their relative contributions via particularly relevant feature interaction patterns. Note classical MKL methods~\cite{8627941,pmlr-v119-ghari20a} typically lack the structured, layer-wise formulation required for efficient, end-to-end deep representation learning.

\subsection{Higher-order Modeling in Deep Neural Networks}

In deep learning, a number of studies have explored incorporating higher-order interactions or higher-order filtering structures into network layers to enhance feature representations and capture richer nonlinear relationships. A representative example is the second-order/bilinear formulation: Bilinear CNN computes outer products between two feature streams followed by pooling, thereby explicitly modeling local second-order interactions and achieving strong performance in fine-grained recognition \cite{7410527}. However, such second-order representations often lead to extremely high-dimensional features (the outer-product dimension grows quadratically with the channel dimension), resulting in a significant increase in storage and in computational costs. To address this issue, subsequent work proposed compact bilinear pooling, which employs approximation techniques to compress the representation into a tractable dimension while largely preserving discriminative power, thereby alleviating the overhead of high-dimensional bilinear features \cite{7780410}.

Beyond explicit second-order constructions, other works have adopted a more general view of higher-order polynomial networks/polynomial layers, treating the network output as a high-order polynomial of the input and using tensor factorization or related techniques to reduce the parameter size, so that higher-order structures can be implemented and trained in a factorized form \cite{9156685}. Closely related is the direction of inserting higher-order tensor mappings as trainable layers. For instance, Tensor Regression Networks employ low-rank multilinear mappings to project high-order activation tensors into the output space, yielding a more compact parameterization and more controllable computational complexity \cite{10.5555/3455716.3455839}. From the perspective of filters and convolutional operators, higher-order convolutional designs inspired by Volterra expansions have been proposed to model higher-order nonlinearities and structural properties such as equivariance \cite{9247263}. Related cascaded filtering constructions have also been explored for image classification \cite{GHANEM2023200210} and higher-order interaction modeling \cite{JMLR:v25:21-1082}.

Higher-order interaction layers are highly expressive, but their use in large deep models is often limited by rapidly increasing model and compute costs with interaction order and input dimension. Many existing formulations function as head-level aggregation or task-specific mapping modules, so their effectiveness depends on task settings and head design. In contrast, our goal is to replace standard convolutions with the proposed method at the operator level under matched topology and training, making these approaches not directly comparable under a controlled evaluation.


\section{Methodology}
\label{method}

Building on the demonstrated learning capacity of Volterra Filtering and the potential of kernelization in Section~\ref{background}, we proceed with the key idea of characterizing Volterra basis functions. Adopting a multi-kernel representation, i.e.\ a weighted linear combination of multiple kernel functions, is shown to yield a more efficient realization of a Volterra filter with a correspondingly flexible and more compact parametrization of higher order interaction terms. To that end, we first introduce the Multi-Kernel Volterra representation and its theoretical justification to subsequently provide a constructive procedure to instantiate kVNN kernels corresponding to the second- and third-order components. We note that the current approach can be extended to arbitrary order with manageable growth in complexity.

\subsection{Learnable multi-kernel representation}

The RKHS and representer-theorem background in Section~\ref{background} offers a direct sample-centered kernel expansion, which scales with the dataset size and is hence ill-suited for integrating high-order Volterra interactions into compositional layers of learning networks. Our approach pursues a compact kernelized parameterization in which the kernel functions are not tied to all training samples and can be organized in an order-adaptive manner.

To that end, we introduce a learnable-kernel formulation, with an order-wise expansion which captures multi-order interactions:
\begin{equation}
\label{eq:learnable_kernel_expansion}
k(x,w)=\sum_{r=1}^{p}\beta_r\,(x^\top w)^r,
\end{equation}
where $x\in\mathbb{R}^{d}$ is the input, $w\in\mathbb{R}^{d}$ denotes a kernel center, and $\{\beta_r\}_{r=1}^{p}$ are order-dependent coefficients. This is directly inspired by the geometry of the data manifold characterized by its reach and curvature, yielding a local approximation in contrast to the global approximation strategy of the representer theorem. This strategy not only reduces the approximation to a finite set of $M$ charts within a manifold atlas but also minimizes unnecessary and complexity inducing cross terms. Instead of choosing $w$ from the training samples, we treat the centers as learnable parameters forming a compact set of centers $\{w_i\}_{i=1}^{M}\subset\mathbb{R}^{d}$ with $M\ll N$. The resulting mapping is represented by
\begin{equation}
\label{eq:compact_representation}
f(x)\approx \sum_{i=1}^{M}\gamma_i\,k(x,w_i),
\end{equation}
where $\{\gamma_i\}_{i=1}^{M}$ are learnable coefficients. As the centers are learned in a data-driven manner, this representation aggregates information in the data space with far fewer kernel functions around each chart and across the atlas cover of the manifold, while significantly reducing both computational and storage complexity. Crucially, the reduced number of basis functions makes it possible to allocate dedicated kernel components to different interaction orders without incurring prohibitive complexity.

In light of the learnable-kernel formulation above, we now introduce a multi-kernel construction that explicitly captures the multi-order structure of Volterra models. A $p$-th order truncated Volterra mapping can naturally be decomposed into a sum of order-specific components. For a vectorized input $x\in\mathbb{R}^{d}$ with entries $\{x_i\}_{i=1}^{d}$, we write
\begin{equation}
\label{eq:volterra_decomp}
f(x)=\sum_{r=1}^{p} f_r(x),
\end{equation}
where the $r$-th order component is a homogeneous polynomial of degree $r$, defined as:
\begin{equation}
\label{eq:volterra_fr}
f_r(x)=\sum_{i_1=1}^{d}\cdots\sum_{i_r=1}^{d} h_r(i_1,\ldots,i_r)\,x_{i_1}\cdots x_{i_r}.
\end{equation}
This order-wise decomposition suggests that different interaction orders may exhibit distinct characteristics of the data manifold, and it is therefore natural to model each order using a specialized kernel component. Note that this also reduces redundant cross terms.

To this end, we define an order-$r$ kernel basis function (degree-$r$ kernel term) as
\begin{equation}
\label{eq:kr_def}
k_r(x,w)\triangleq (x^\top w)^r,
\end{equation}
so that the multi-order kernel in~\eqref{eq:learnable_kernel_expansion} can be viewed as a weighted sum of these order-$r$ terms. For flexibility, we forego a single shared center set for all orders, and instead introduce order-dependent learnable centers. Specifically, for each order $r\in\{1,\ldots,p\}$, we learn a compact set of centers
\begin{equation}
\label{eq:center_set}
\mathcal{W}_r=\{w_{r,i}\}_{i=1}^{M_r}\subset\mathbb{R}^{d},\qquad M_r\ll N,
\end{equation}
together with coefficients $\{\gamma_{r,i}\}_{i=1}^{M_r}$. The $r$-th order component is then approximated by a multi-center expansion
\begin{equation}
\label{eq:fr_expand}
f_r(x)\approx \sum_{i=1}^{M_r}\gamma_{r,i}\,k_r(x,w_{r,i})
=\sum_{i=1}^{M_r}\gamma_{r,i}\,(x^\top w_{r,i})^r.
\end{equation}
Combining all orders yields the proposed learnable multi-kernel Volterra representation
\begin{equation}
\label{eq:mk_volterra}
f(x)\approx \sum_{r=1}^{p}\sum_{i=1}^{M_r}\gamma_{r,i}\,(x^\top w_{r,i})^r.
\end{equation}
Equation~\eqref{eq:mk_volterra} explicitly highlights that the overall basis can be written as a sum of different kernel functions: each interaction order $r$ is modeled by its own degree-$r$ kernel term $k_r$ and its own learned center set $\mathcal{W}_r$. This order-specific design increases modeling flexibility for multi-order interactions while keeping the overall complexity controlled through the compactness constraint $M_r\ll N$.

From a functional-space perspective, each order $r$ can be associated with a dedicated RKHS $\mathcal{H}_r$ induced by the degree-$r$ polynomial kernel
\begin{equation}
\label{eq:poly_kernel}
K_r(x,x')\triangleq (x^\top x')^r.
\end{equation}
Accordingly, the overall model can be interpreted as a sum of components in distinct spaces,
\begin{equation}
\label{eq:direct_sum_rkhs}
f \in \mathcal{H}_1 \oplus \mathcal{H}_2 \oplus \cdots \oplus \mathcal{H}_p,
\qquad f=\sum_{r=1}^{p} f_r,\ \ f_r\in\mathcal{H}_r,
\end{equation}
which provides a principled rationale for using specialized kernels and centers for different interaction orders.

For completeness, we provide key theoretical properties of the proposed construction in Appendix~\ref{app:theory_mk}:
Theorem~\ref{lem:mk_compact_nostep} establishes the validity of the weighted multi-kernel and the induced order-wise decomposition,
and Theorem~\ref{thm:finite_atomic_rep} provides a finite atomic expressivity result for truncated Volterra mappings in polynomial linear spaces.

In summary, the proposed learnable multi-kernel formulation provides an order-adaptive and compact parameterization of truncated Volterra mappings by assigning distinct learnable center sets to different interaction orders. This representation serves as the functional basis of our kVNN layer. In the next subsection, we instantiate the above expansion into a practical network module by specifying how the atoms are organized into branches and implemented as convolutional kernels, as illustrated in Fig.~\ref{fig:kvnn_structure}.

\subsection{Layer Structure: kVNN Filter Construction}
\label{subsec:structure}
To proceed with the construction of the proposed kVNN layer, we aggregate multiple branches as the output, each of which generates one interaction order for an input vector $x \in \mathbb{R}^d$, shown in Fig.~\ref{fig:kvnn_structure}. Specifically, an order-$r$ branch is parameterized by learnable centers (projection vectors) and scalar coefficients: the branch first computes a linear projection $x^\top w_{r,i}$ (optimal hyperplane) and then applies an order-specific polynomial nonlinearity by raising the projection to the $r$-th power, thereby enabling interaction modeling.

For a second-order configuration (left panel of Fig.~\ref{fig:kvnn_structure}), the layer consists of combined order-1 linear term and an order-2 term. The order-1 branch only uses a single linear atom, i.e., one learnable projection vector $w_{1}$ together with a coefficient $\gamma_{1}$, providing the basic linear response. The order-2 branch aggregates $n$ quadratic atoms, parameterized by a set of learnable centers $\{w_{2,i}\}_{i=1}^{n}$ and coefficients $\{\gamma_{2,i}\}_{i=1}^{n}$. The final output is the sum of the two contributions, i.e., $y(x)=y_1(x)+y_2(x)$.

For a third-order configuration (right panel of Fig.~\ref{fig:kvnn_structure}), a cubic branch is further added on top of the second-order structure. This order-3 branch aggregates $m$ cubic atoms, parameterized by centers $\{w_{3,i}\}_{i=1}^{m}$ and coefficients $\{\gamma_{3,i}\}_{i=1}^{m}$, leading to $y(x)=y_1(x)+y_2(x)+y_3(x)$.

Empirically, we configure $n$ according to the data complexity; specifically, we set $n=1$ for image-related tasks and increase it to $n=4$ for video-related tasks to ensure sufficient representational power.

A key design choice is that different orders use different center sets (e.g., $\{w_{2,i}\}$ and $\{w_{3,i}\}$ are learned independently), which allows each interaction order to capture distinct characteristics of the input and avoids the limited expressivity induced by sharing centers across orders. Here, $n$ and $m$ are structural hyperparameters that can be set according to the complexity of the data and the task at hand: more complex data distributions or stronger high-order expressivity typically benefit from larger values, whereas simpler tasks or resource-constrained settings may use smaller values to reduce computational and storage overhead. In implementation, the above order-wise filter corresponds to one convolutional kernel in a CNN layer (i.e., a building block that generates one output channel); a convolutional layer typically contains multiple such filters computed in parallel across channels to form the layer output. All centers and coefficients are learned end-to-end with a task loss, and the resulting layer can be viewed as an efficient and structured approximation of a truncated Volterra filter using a finite set of learnable kernel atoms.

\begin{figure}[t]
  \centering
  \includegraphics[width=1\linewidth]{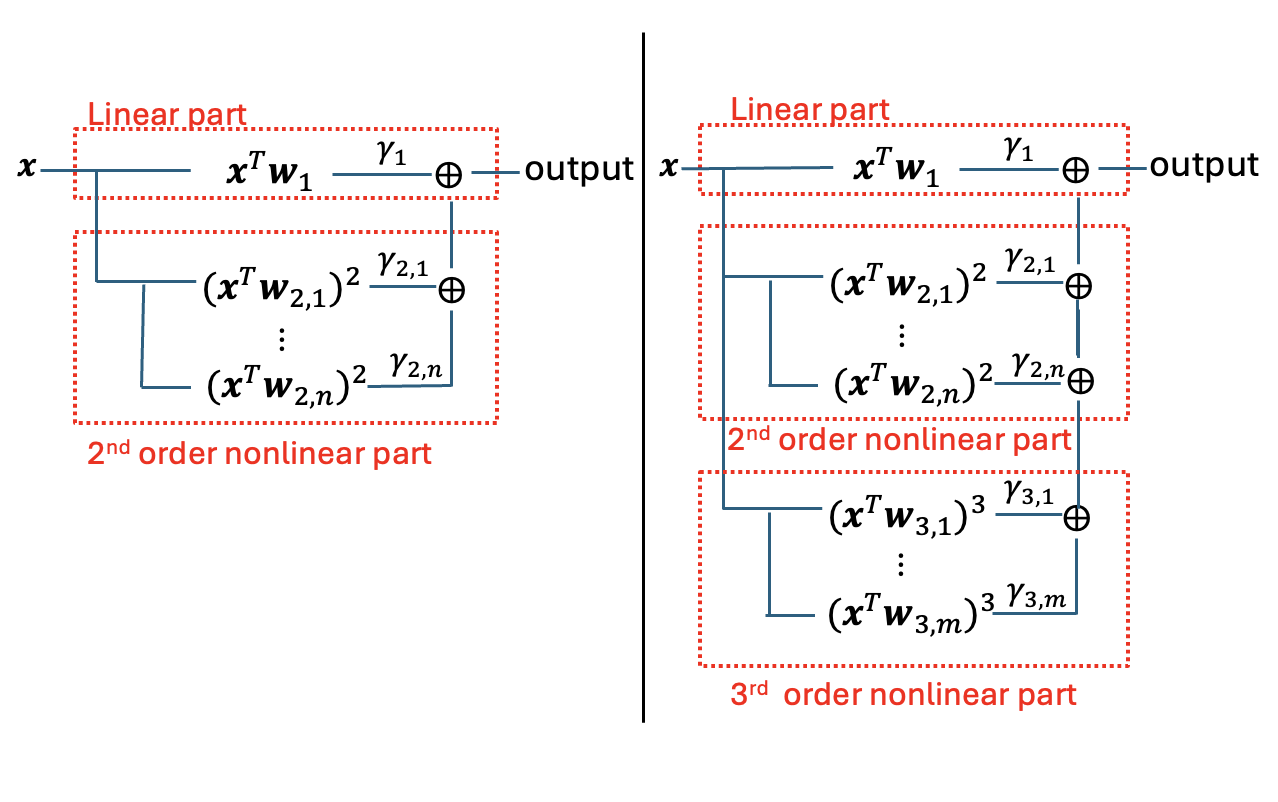}
    \caption{kVNN filter structures. Left: second-order filter composed of an order-1 (linear) branch and an order-2 branch that aggregates $n$ quadratic atoms. Right: third-order filter with an additional order-3 branch that aggregates $m$ cubic atoms. Each order-$r$ branch uses learnable centers $\{w_{r,i}\}$ and coefficients $\{\gamma_{r,i}\}$, and sums atoms of the form $(x^\top w_{r,i})^r$. The order-1 branch uses a single linear atom, and the overall output is obtained by summing the branch outputs across orders.}
    
  \label{fig:kvnn_structure}
\end{figure}

\section{Experiments}
\label{experiments}
To evaluate the effectiveness of the proposed kVNN, we conducted experiments on two representative tasks: video action recognition and image denoising. The following two subsections describe the experimental setups and implementation details for each task, respectively. 

\subsection{Video action recognition task setting}

For video action recognition, we follow a standard two-stream evaluation pipeline (Fig.~\ref{fig:Model_Model_VideoActionRecognition}) to assess the representation capability and computational efficiency of the proposed operator. 
Without introducing any additional specialized modules, RGB frames $X^{\text{rgb}}$ and optical flow fields $X^{\text{flow}}$ are processed by two kVNN-based encoders, an RGB encoder $E_{\text{rgb}}(\cdot)$ and a flow encoder $E_{\text{flow}}(\cdot)$, to yield:

\begin{equation}
    Z^{\text{rgb}} = E_{\text{rgb}}(X^{\text{rgb}}), \quad
    Z^{\text{flow}} = E_{\text{flow}}(X^{\text{flow}}).
\end{equation}
The RGB stream mainly captures appearance information, while the optical-flow stream emphasizes motion patterns between adjacent frames. The encoded features are then fused using a conventional strategy (e.g., concatenation) and fed to a linear classifier to produce the predicted action scores $\hat{y}\in\mathbb{R}^C$.

We consider two capacity-scaled instantiations under the same two-stream pipeline, denoted as kVNN-S and kVNN-B, which only differ in the encoder depth (4 vs. 5 kVNN blocks per stream) to study accuracy--efficiency trade-offs. For each instantiation, we evaluate two kVNN orders: second-order ($p=2$) and third-order ($p=3$). To mitigate overfitting, we apply layer-wise weight decay to kVNN parameters, using stronger decay for deeper layers.

\begin{figure}[t]
    \centering
    \includegraphics[width=1\linewidth]{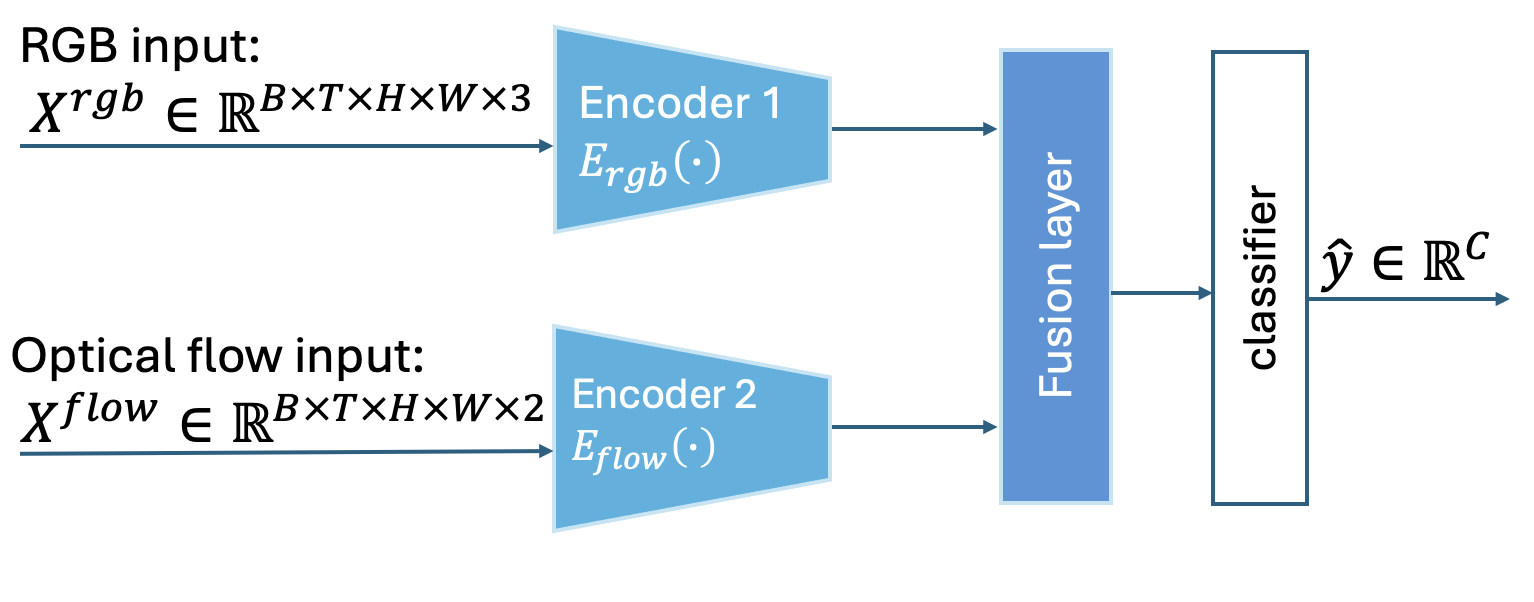}
    \caption{Schematic of a standard two-stream evaluation pipeline instantiated with kVNN-based encoders. RGB frames $X^{\text{rgb}}$ and optical flow $X^{\text{flow}}$ are encoded into $Z^{\text{rgb}}$ and $Z^{\text{flow}}$, fused by a conventional strategy (e.g., concatenation), and passed to a classifier to produce the predicted action scores $\hat{y} \in \mathbb{R}^C$ ($C$ denotes the number of classes).}
    
    \label{fig:Model_Model_VideoActionRecognition}
\end{figure}

We evaluate this two-stream instantiation on two benchmarks, UCF101\cite{DBLP:journals/corr/abs-1212-0402} and HMDB51\cite{6126543}. Results on UCF101 are reported in Table~\ref{tab:ucf101_comparison}, and the joint comparison on both datasets is given in Table~\ref{tab:ucf101_hmdb51_comparison}.

UCF101 contains 13{,}320 video clips from 101 action 
categories, covering a wide range of daily-life and sports scenarios, whereas 
HMDB51 consists of 6{,}766 video clips from 51 categories with more complex 
backgrounds and more ambiguous motions.

\newcommand{\best}[1]{\textcolor{red}{\textbf{#1}}}
\newcommand{\secondbest}[1]{\textcolor{blue}{\textbf{#1}}}

\begin{table}[t]
    \centering
    \scriptsize
    \setlength{\tabcolsep}{3pt}
    \resizebox{\columnwidth}{!}{%
    \begin{tabular}{lccc}
        \toprule
        Model & Param & GFLOPs & ACC (UCF101) \\
        \midrule
        kVNN-S (2nd order filter)  & 7.7M  & \best{15.96} & 86.51\% \\
        kVNN-S (3rd order filter)  & 12.29M & 28.14       & 90.02\% \\
        \midrule
        kVNN-B (2nd order filter)  & 17M   & \secondbest{19.86} & \secondbest{91.17\%} \\
        kVNN-B (3rd order filter)  & 30M   & 35.18              & \best{92.67\%} \\
        \midrule
        I3D~\cite{8099985}                  & 56M  & 63.34 & 88.8\%  \\
        DTMV-CNN~\cite{8249882}           & --   & 83.4  & 87.5\%  \\
        Terao~\cite{10096477}                & --   & 33.6  & 89.2\%  \\
        Video-FocalNet tiny*~\cite{10376718} & 49M  & 63   & 90.1\% \\
        Video-FocalNet*~\cite{10376718}      & 157M & --   & 91.1\% \\
        DVFL-Net*~\cite{11185187}          & 22M  & 27    & 88.4\% \\
        \bottomrule
    \end{tabular}
    }
    \caption{Comparison with existing lightweight models on UCF101 in terms of
    parameter count, GFLOPs, and top-1 accuracy. 
    ``2nd order filter'' means each filter in the model is instantiated as a
    second-order Volterra filter, and ``3rd order filter'' means each filter is
    a third-order Volterra filter. 
    *: results obtained with large-scale pretraining (Kinetics).
    ``--'': model not open-sourced; the corresponding values are unavailable in
    the original paper. 
    Blue entries indicate the second-best result / second-lowest GFLOPs, 
    and red entries indicate the best result / lowest GFLOPs.}
    \label{tab:ucf101_comparison}
\end{table}

Experimental results on UCF101 are summarized in Table~\ref{tab:ucf101_comparison}, which compares four
kVNN variants with several existing lightweight CNN-based video action
recognition models in terms of parameter count, GFLOPs, and top-1 accuracy.
The table also indicates which methods rely on large-scale pretraining (e.g., Kinetics, marked by *), whereas all kVNN variants are trained from scratch on UCF101.

From Table~\ref{tab:ucf101_comparison} it can be observed that, in the lightweight configuration,
kVNN-S (3rd order filter) achieves an accuracy of 90.02\% on UCF101 with only
12.29M parameters and 28.14 GFLOPs, clearly outperforming existing classical
and recent CNN-based methods. Its accuracy is comparable to, or even slightly
better than, that of models pretrained on large-scale datasets, while requiring
significantly fewer parameters and competitive computational cost. The extremely
efficient kVNN-S (2nd order filter) requires only 7.7M parameters and
15.96 GFLOPs, yet still attains an accuracy comparable to other CNN-based
models, a cost-effective option for resource-constrained
scenarios.

In the higher-capacity configuration, kVNN-B (2nd order filter) reaches an
accuracy of 91.17\% with only 17M parameters and 19.86 GFLOPs, achieving one of the best accuracies and the second-lowest computational cost among all methods.
When each filter is further upgraded to a third-order Volterra filter,
kVNN-B (3rd order filter) raises the UCF101 accuracy to 92.67\%, achieving the
best performance among all compared methods, including CNN models that
rely on Kinetics pretraining.

To further illustrate the trade-off between accuracy and computational cost,
the comparison in Table~\ref{tab:ucf101_comparison} is visualized in
Fig.~\ref{fig:ucf101_table_vis}.

\begin{figure}[t]
    \centering
    \includegraphics[width=\linewidth]{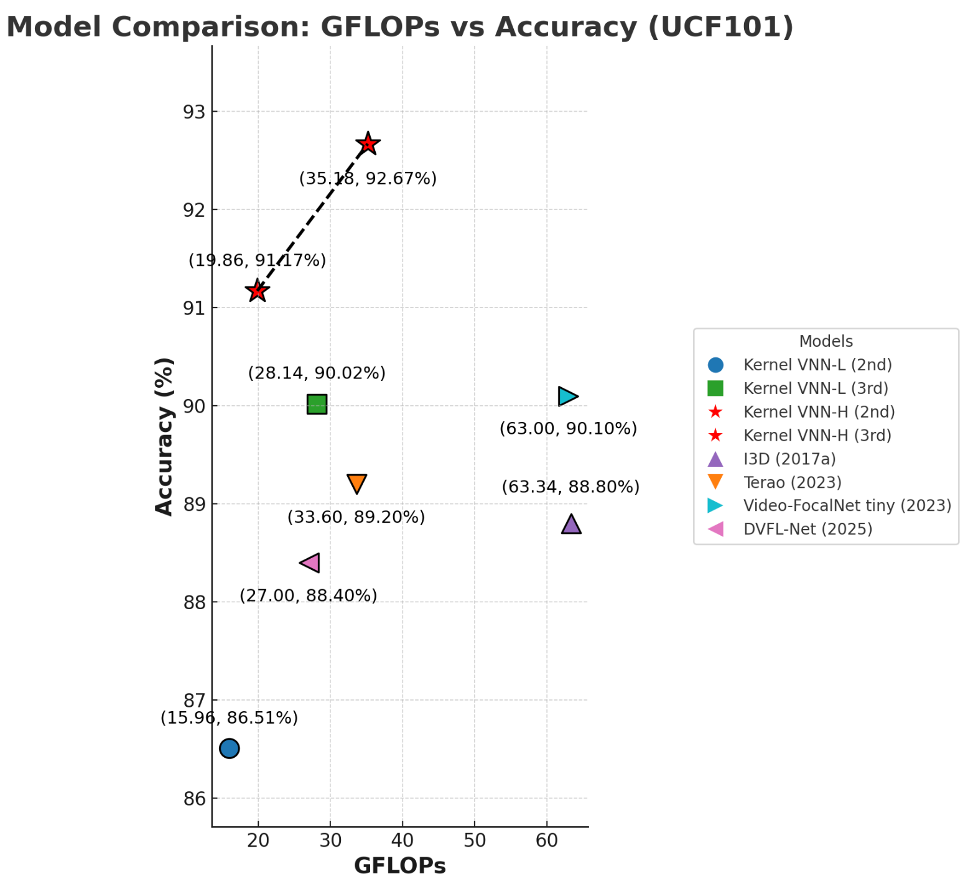}
    \caption{Visualization of the comparison in 
    Table~\ref{tab:ucf101_comparison}. Each point corresponds to a model, 
    showing the trade-off between top-1 accuracy on UCF101 and computational 
    cost (GFLOPs).}
    \label{fig:ucf101_table_vis}
\end{figure}

To further assess generalization across datasets, Table~\ref{tab:ucf101_hmdb51_comparison} reports a broader comparison on both UCF101 and HMDB51. The models built with the kVNN operator consistently outperform classical baselines such as C3D and I3D on both benchmarks. In particular, kVNN-B (3rd order filter) achieves the best accuracy on UCF101 (92.67\%) and obtains strong performance on HMDB51 (67.12\%).
Meanwhile, several competing methods in Table~\ref{tab:ucf101_hmdb51_comparison} rely on large-scale Kinetics pretraining (marked by *), while all kVNN variants are trained from scratch on the target datasets, yet achieve strong HMDB51 accuracy, close to the best reported results that rely on large-scale pretraining.

\begin{table}[t]
    \centering
    \scriptsize
    \setlength{\tabcolsep}{5pt}
    \resizebox{\columnwidth}{!}{%
    \begin{tabular}{lcc}
        \toprule
        Model & ACC (UCF101) & ACC (HMDB51) \\
        \midrule
        kVNN-S (3rd order filter)  & 90.02\% & 64.95\% \\
        kVNN-B (2nd order filter)  & 91.17\% & 65.55\% \\
        kVNN-B (3rd order filter)  & 92.67\% & 67.12\% \\
        \midrule
        C3D~\cite{Tran_2015_ICCV}                     & 85.2\% & 51.6\% \\
        I3D~\cite{8099985}                     & 88.8\% & 62.2\% \\
        MV2flow~\cite{10.1145/3422360}*            & 92.6\% & 65.5\% \\
        Faster-CoViAR~\cite{Xiong2021FasterFCoViARFF}* & 91.2\% & 63.2\% \\
        MVR-AR~\cite{10.1007/978-3-030-68763-2_53}*               & 92.1\% & 60.2\% \\
        IMR-Net~\cite{9414534}*             & 92.6\% & 67.8\% \\
        IMR-Net (scratch)~\cite{9414534}    & 80.2\% & 55.9\% \\
        SOR-TC~\cite{10.1016/j.neucom.2023.02.045}                & 91.0\% & 60.0\% \\
        \bottomrule
    \end{tabular}
    } 
    \caption{Comparison on UCF101 and HMDB51 in terms of top-1 accuracy.
    ``2nd/3rd order filter'' means each filter in the model is instantiated as a
    second-/third-order Volterra filter.
    *: results obtained with large-scale pretraining (Kinetics).}
    \label{tab:ucf101_hmdb51_comparison}
\end{table}

Overall, the kVNN family offers flexible accuracy--complexity trade-offs across different orders and capacities: second-order configurations are more efficient in parameters and computation, making them suitable for resource-limited environments, while third-order configurations significantly improve recognition accuracy at a moderate increase in complexity, even surpassing many pretrained methods. These results demonstrate that the kVNN design based on high-order Volterra filters can achieve a favorable balance between efficiency and performance, even when trained from scratch without any pretraining.

\subsection{Image denoising task setting}

To further demonstrate the efficiency and robustness of the proposed kVNN layer beyond video action recognition, image denoising is used as an additional testbed.
For each CNN baseline, we construct kVNN counterparts by replacing Conv blocks with kVNN blocks (with coupled external activations removed accordingly).
Across the denoising settings below, we report reduced-depth kVNN instantiations to study the accuracy--efficiency trade-off; for real-world denoising, we additionally include a same-depth kVNN counterpart to isolate the effect of block replacement.
Three denoising settings are considered:

\noindent\textbf{(1) Fixed-$\sigma$ AWGN denoising.}
We follow the standard AWGN denoising setting and train separate models for each noise level
$\sigma \in \{15,25,50\}$. For each $\sigma$, the CNN baseline (DnCNN\cite{10.1109/TIP.2017.2662206}) and its kVNN counterpart are trained on the same training set and evaluated on BSD68 under the corresponding fixed noise level.

Table~\ref{tab:bsd68_dncnn_fixedsigma} reports the accuracy--efficiency trade-off.
The kVNN variant is a reduced-depth instantiation constructed by replacing Conv blocks with kVNN blocks (with coupled external activations removed accordingly) and decreasing the network depth to improve efficiency. It reduces parameters from 557{,}057 to 372{,}545 (about 33\%) and computation from 2.285 to 1.517 GFLOPs (about 34\%), while maintaining comparable PSNR across all noise levels.

These results suggest that kVNN can achieve similar denoising quality with substantially lower model complexity and computational cost.

\begin{table}[t]
    \centering
    \scriptsize
    \setlength{\tabcolsep}{4pt}
    \resizebox{\columnwidth}{!}{%
    \begin{tabular}{c|cc|cc|cc}
        \toprule
        $\sigma$ &
        \begin{tabular}[c]{@{}c@{}}Params\\ (CNN)\end{tabular} &
        \begin{tabular}[c]{@{}c@{}}Params\\ (kVNN)\end{tabular} &
        \begin{tabular}[c]{@{}c@{}}GFLOPs\\ (CNN)\end{tabular} &
        \begin{tabular}[c]{@{}c@{}}GFLOPs\\ (kVNN)\end{tabular} &
        \begin{tabular}[c]{@{}c@{}}PSNR\\ (CNN)\end{tabular} &
        \begin{tabular}[c]{@{}c@{}}PSNR\\ (kVNN)\end{tabular} \\
        \midrule
        15 & 557{,}057 & 372{,}545 & 2.285 & 1.517 & 31.73 & 31.76 \\
        25 & 557{,}057 & 372{,}545 & 2.285 & 1.517 & 29.23 & 29.27 \\
        50 & 557{,}057 & 372{,}545 & 2.285 & 1.517 & 26.23 & 26.23 \\
        \bottomrule
    \end{tabular}
    } 
    \caption{Fixed-$\sigma$ AWGN denoising on BSD68 with the DnCNN backbone.
    The kVNN variant is constructed by replacing Conv blocks with kVNN blocks (with coupled external activations removed accordingly) and reducing the network depth to improve efficiency.
    We report parameter count, GFLOPs, and PSNR.}

    \label{tab:bsd68_dncnn_fixedsigma}
\end{table}

\noindent\textbf{(2) Random-$\sigma$ AWGN training (single model).}
To evaluate robustness under varying noise levels, we train a single denoiser using AWGN with $\sigma$ randomly sampled per training patch from a uniform distribution $\sigma \sim \mathcal{U}(0,50)$. We report results on Set12 at a representative test noise level $\sigma=25$, and compare CNN-based backbones (DnCNN\cite{10.1109/TIP.2017.2662206}, FFDNet\cite{10762fa0005c48318dd9aef22de37a1c}, and DCANet\cite{10.1007/s00530-024-01469-8}) against their kVNN-based counterparts in terms of PSNR, parameter count, and GFLOPs.

Table~\ref{tab:set12_randomsigma_sigma25} summarizes the results under random-$\sigma$ training. 
Across the three backbones considered (DnCNN\cite{10.1109/TIP.2017.2662206}, 
FFDNet\cite{10762fa0005c48318dd9aef22de37a1c}, and DCANet\cite{10.1007/s00530-024-01469-8}), kVNN matches or improves denoising performance at lower model complexity, yielding a better accuracy--efficiency trade-off.
For example, on DnCNN, kVNN reduces parameters and GFLOPs by about 33\%/34\% and improves PSNR by 0.14 dB (30.31 $\rightarrow$ 30.45 dB). Similar gains are observed for FFDNet (0.14 dB) and DCANet (0.09 dB). All methods are trained and evaluated under the same random-$\sigma$ protocol.

In addition to the quantitative results, Fig.~\ref{fig:set12_vis_randomsigma} provides visual comparisons for
DCANet under the random-$\sigma$ training setting. Compared with the CNN-based DCANet, the kVNN-based DCANet recovers fine textures more faithfully and produces sharper structural boundaries, while suppressing residual noise artifacts.

\begin{table}[t]
    \centering
    \scriptsize
    \setlength{\tabcolsep}{4pt}
    \resizebox{\columnwidth}{!}{%
    \begin{tabular}{l|cc|cc|cc}
        \toprule
        Method &
        \begin{tabular}[c]{@{}c@{}}Params\\ (CNN)\end{tabular} &
        \begin{tabular}[c]{@{}c@{}}Params\\ (kVNN)\end{tabular} &
        \begin{tabular}[c]{@{}c@{}}GFLOPs\\ (CNN)\end{tabular} &
        \begin{tabular}[c]{@{}c@{}}GFLOPs\\ (kVNN)\end{tabular} &
        \begin{tabular}[c]{@{}c@{}}PSNR\\ (CNN)\end{tabular} &
        \begin{tabular}[c]{@{}c@{}}PSNR\\ (kVNN)\end{tabular} \\
        \midrule
        DnCNN\cite{10.1109/TIP.2017.2662206}  & 557{,}057 & 372{,}545 & 2.285 & 1.517 & 30.31 & 30.45 \\
        FFDNet\cite{10762fa0005c48318dd9aef22de37a1c} & 557{,}633 & 373{,}057 & 2.288 & 1.517 & 30.27 & 30.41 \\
        DCANet\cite{10.1007/s00530-024-01469-8} & 1.4M      & 1.3M      & 5.106 & 4.706 & 30.47 & 30.56 \\
        \bottomrule
    \end{tabular}
    } 
    \caption{Random-$\sigma$ AWGN training and fixed-$\sigma$ evaluation on Set12. During training, the noise level is randomly sampled per patch from $\sigma\sim\mathcal{U}(0,50)$ to generate AWGN. We report PSNR on Set12 under AWGN with $\sigma=25$, together with parameter count and GFLOPs. kVNN counterparts are implemented as reduced-depth instantiations by replacing Conv blocks with kVNN blocks (with coupled external activations removed accordingly) and decreasing the network depth to improve efficiency.}

    \label{tab:set12_randomsigma_sigma25}
\end{table}

\begin{figure}[t]
    \centering

    \begin{minipage}{\linewidth}
        \centering
        \includegraphics[width=\linewidth]{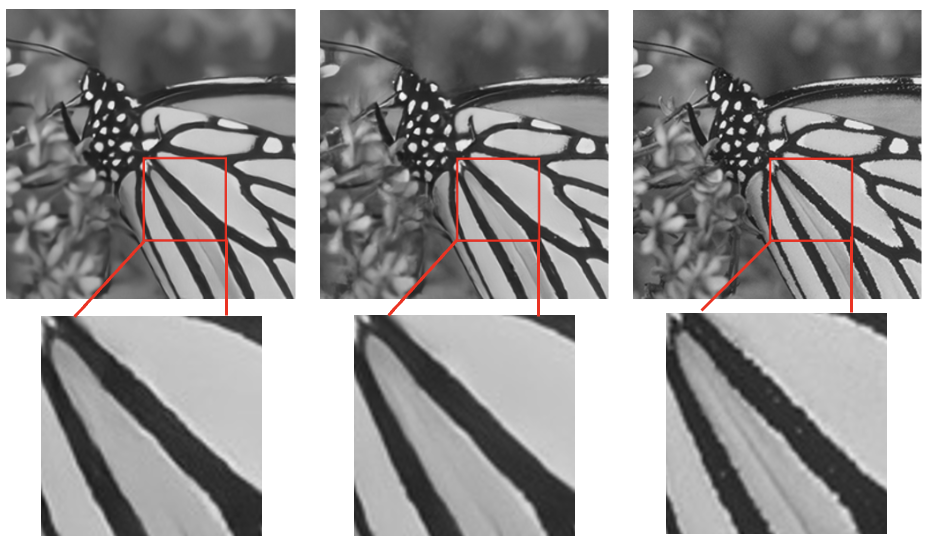}

        \label{fig:set12_vis_butterfly}
    \end{minipage}

    \vspace{0.5em}

    \begin{minipage}{\linewidth}
        \centering
        \includegraphics[width=\linewidth]{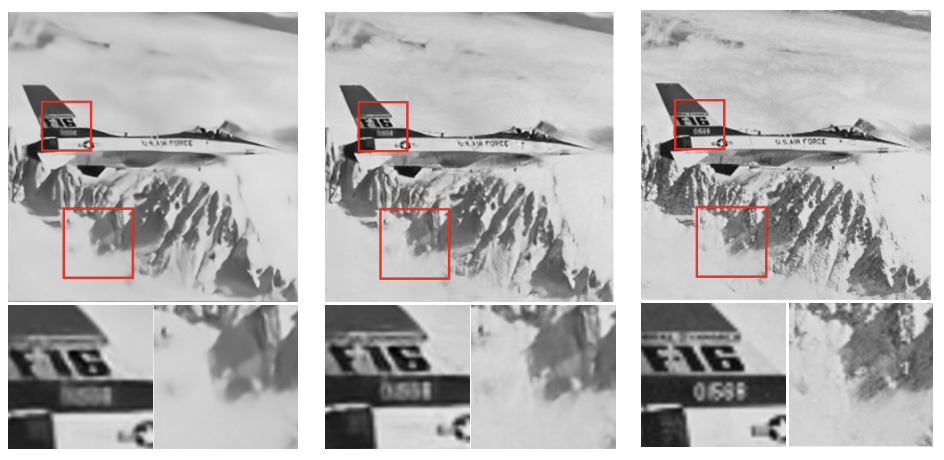}
        \label{fig:set12_vis_plane}
    \end{minipage}

    \caption{Visual comparison of Gaussian denoising reconstruction results on Set12 ($\sigma=25$).
    In each example, the three columns from left to right are: DCANet (CNN-based), DCANet (kVNN-based), and the
    ground-truth image. Zoomed-in patches are provided for clearer comparison of texture recovery and boundary
    sharpness.}
    \label{fig:set12_vis_randomsigma}
\end{figure}

\noindent\textbf{(3) Real-world denoising under a self-supervised framework.}
For real-world sRGB noise removal, we follow the spatially adaptive self-supervised denoising framework in \cite{10203603}, which learns supervision from flat and textured regions using a blind-neighborhood network (BNN\cite{10203603}) and a locally aware network (LAN\cite{10203603}), and then trains a final denoising network (U-Net\cite{10.1007/978-3-319-24574-4_28}) with the learned spatially adaptive supervisions. During inference, BNN and LAN are detached and only the final denoiser (U-Net) is used. We train the denoiser on SIDD Medium and evaluate on the SIDD validation set, reporting results for both the CNN U-Net and the kVNN U-Net variants.

Table~\ref{tab:sidd_medium_sidd_unet} shows that the kVNN-based denoisers remain effective under real-world noise. Under the same self-supervised training recipe, replacing Conv blocks with kVNN blocks improves PSNR (37.29 $\rightarrow$ 37.47 dB), indicating that the proposed operator is compatible with this CNN-tailored pipeline and can yield better restoration quality.
To explicitly assess efficiency, a reduced-depth kVNN variant achieves 37.35 dB with substantially fewer parameters (0.7M vs. 1.0M), demonstrating a favorable accuracy--efficiency trade-off. These results suggest that kVNN is compatible with existing CNN-based training recipes and can be seamlessly plugged into standard architectures to achieve stronger denoising performance.

\begin{table}[t]
    \centering
    \scriptsize
    \setlength{\tabcolsep}{6pt}
    \begin{tabular}{l|c|c}
        \toprule
        Model & Parameters & PSNR \\
        \midrule
        Trained CNN model \cite{10203603}                & 1.0M & 37.29 \\
        Trained kVNN model (Same layers)    & 1.2M & 37.47 \\
        Trained kVNN model (Less layers)    &   0.7M   & 37.35 \\
        \bottomrule
    \end{tabular}
    \caption{Real-world denoising on SIDD Medium. The denoiser uses a U-Net backbone and is trained on the SIDD Medium training set and evaluated on the
    validation set. Same layers: keeps the same U-Net depth while replacing convolution layers with kVNN layers; Less layers: reduces the network depth to illustrate the accuracy--efficiency trade-off.}
    \label{tab:sidd_medium_sidd_unet}
\end{table}

In summary, across both tasks, the results consistently show that kVNN can serve as a replacement for convolutional operators within standard backbones, improving the accuracy--efficiency trade-off under controlled evaluation settings. This conclusion is supported by consistent gains achieved under matched training protocols and comparable evaluation pipelines. It does not rely on introducing task-specific architectural components or bespoke design choices tailored to action recognition or denoising.

\section{Conclusion}
\label{conclusion}

This paper presented a kernelized Volterra Neural Network (kVNN) layer that enables higher-order filtering in a structured and computationally efficient form. By introducing a learnable multi-kernel representation with compact, learnable centers, the proposed formulation provides an order-adaptive parameterization that can be instantiated as a practical network module via branch-wise kVNN filter construction. The kVNN layer can directly replace standard convolutions and be trained end-to-end in existing architectures. Across video action recognition and image denoising, kVNN consistently reduces parameters and GFLOPs while matching or improving performance, even when trained from scratch.

 In future work, we plan to explore extending kVNN to broader operator families and integrating it into transformer-based architectures (e.g., Vision Transformers~\cite{dosovitskiy2021an}). More generally, introducing kVNN-style higher-order interactions into token-mixing modules may offer an efficient way to enrich feature interactions while maintaining strong representation quality.



\bibliographystyle{IEEEbib}   
\bibliography{reference}

@InProceedings{Tran_2015_ICCV,
author = {Tran, Du and Bourdev, Lubomir and Fergus, Rob and Torresani, Lorenzo and Paluri, Manohar},
title = {Learning Spatiotemporal Features With {3D} Convolutional Networks},
booktitle = {Proceedings of the IEEE International Conference on Computer Vision (ICCV)},
month = {December},
year = {2015}
}

@INPROCEEDINGS{8099985,
  author={Carreira, João and Zisserman, Andrew},
  booktitle={2017 IEEE Conference on Computer Vision and Pattern Recognition (CVPR)}, 
  title={Quo Vadis, Action Recognition? A New Model and the Kinetics Dataset}, 
  year={2017},
  volume={},
  number={},
  pages={4724-4733},
  doi={10.1109/CVPR.2017.502}}

@article{10.1145/3422360,
author = {Hu, Hezhen and Zhou, Wengang and Li, Xingze and Yan, Ning and Li, Houqiang},
title = {MV2Flow: Learning Motion Representation for Fast Compressed Video Action Recognition},
year = {2021},
issue_date = {October 2020},
publisher = {Association for Computing Machinery},
address = {New York, NY, USA},
volume = {16},
number = {3s},
issn = {1551-6857},
url = {https://doi.org/10.1145/3422360},
doi = {10.1145/3422360},
journal = {ACM Trans. Multimedia Comput. Commun. Appl.},
month = dec,
articleno = {102},
numpages = {19}
}

@inproceedings{Xiong2021FasterFCoViARFF,
  title={Faster-FCoViAR: Faster Frequency-Domain Compressed Video Action Recognition},
  author={Lu Xiong and Xia Jia and Yue Ming and Jiang Zhou and Fan Feng and Nannan Hu},
  booktitle={British Machine Vision Conference},
  year={2021},
  url={https://api.semanticscholar.org/CorpusID:249893115}
}

@INPROCEEDINGS{9414534,
  author={Yang, Xin and Yang, Chunling},
  booktitle={ICASSP 2021 - 2021 IEEE International Conference on Acoustics, Speech and Signal Processing (ICASSP)}, 
  title={Imrnet: An Iterative Motion Compensation and Residual Reconstruction Network for Video Compressed Sensing}, 
  year={2021},
  volume={},
  number={},
  pages={2350-2354},
  doi={10.1109/ICASSP39728.2021.9414534}}

@article{10.1016/j.neucom.2023.02.045,
author = {Zhang, Junsan and Wang, Xiaomin and Wan, Yao and Wang, Leiquan and Wang, Jian and Yu, Philip S.},
title = {SOR-TC: Self-attentive octave ResNet with temporal consistency for compressed video action recognition},
year = {2023},
issue_date = {May 2023},
publisher = {Elsevier Science Publishers B. V.},
address = {NLD},
volume = {533},
number = {C},
issn = {0925-2312},
url = {https://doi.org/10.1016/j.neucom.2023.02.045},
doi = {10.1016/j.neucom.2023.02.045},
journal = {Neurocomput.},
month = may,
pages = {191–205},
numpages = {15}
}

@inproceedings{10.1007/978-3-030-68763-2_53,
author = {Zhou, Chenghui and Chen, Xiaolei and Sun, Pei and Zhang, Guanwen and Zhou, Wei},
title = {Compressed Video Action Recognition Using Motion Vector Representation},
year = {2021},
isbn = {978-3-030-68762-5},
publisher = {Springer-Verlag},
address = {Berlin, Heidelberg},
url = {https://doi.org/10.1007/978-3-030-68763-2_53},
doi = {10.1007/978-3-030-68763-2_53},
booktitle = {Pattern Recognition. ICPR International Workshops and Challenges: Virtual Event, January 10–15, 2021, Proceedings, Part I},
pages = {701–713},
numpages = {13}
}

@ARTICLE{8249882,
  author={Zhang, Bowen and Wang, Limin and Wang, Zhe and Qiao, Yu and Wang, Hanli},
  journal={IEEE Transactions on Image Processing}, 
  title={Real-Time Action Recognition With Deeply Transferred Motion Vector {CNN}s}, 
  year={2018},
  volume={27},
  number={5},
  pages={2326-2339},
  doi={10.1109/TIP.2018.2791180}}

@INPROCEEDINGS{10096477,
  author={Terao, Hayato and Noguchi, Wataru and Iizuka, Hiroyuki and Yamamoto, Masahito},
  booktitle={ICASSP 2023 - 2023 IEEE International Conference on Acoustics, Speech and Signal Processing (ICASSP)}, 
  title={Efficient Compressed Video Action Recognition Via Late Fusion with a Single Network}, 
  year={2023},
  volume={},
  number={},
  pages={1-5},
  doi={10.1109/ICASSP49357.2023.10096477}}

@INPROCEEDINGS{10376718,
  author={Wasim, Syed Talal and Khattak, Muhammad Uzair and Naseer, Muzammal and Khan, Salman and Shah, Mubarak and Khan, Fahad Shahbaz},
  booktitle={2023 IEEE/CVF International Conference on Computer Vision (ICCV)}, 
  title={Video-FocalNets: Spatio-Temporal Focal Modulation for Video Action Recognition}, 
  year={2023},
  volume={},
  number={},
  pages={13732-13743},
  doi={10.1109/ICCV51070.2023.01267}}

@ARTICLE{11185187,
  author={Ullah, Hayat and Shafique, Muhammad Ali and Khan, Abbas and Munir, Arslan},
  journal={IEEE Transactions on Circuits and Systems for Video Technology}, 
  title={DVFL-Net: A Lightweight Distilled Video Focal Modulation Network for Spatio-Temporal Action Recognition}, 
  year={2025},
  volume={},
  number={},
  pages={1-1},
  doi={10.1109/TCSVT.2025.3615909}}

@INPROCEEDINGS{10203603,
  author={Li, Junyi and Zhang, Zhilu and Liu, Xiaoyu and Feng, Chaoyu and Wang, Xiaotao and Lei, Lei and Zuo, Wangmeng},
  booktitle={2023 IEEE/CVF Conference on Computer Vision and Pattern Recognition (CVPR)}, 
  title={Spatially Adaptive Self-Supervised Learning for Real-World Image Denoising}, 
  year={2023},
  volume={},
  number={},
  pages={9914-9924},
  doi={10.1109/CVPR52729.2023.00956}}

@article{10.1109/TIP.2017.2662206,
author = {Zhang, Kai and Zuo, Wangmeng and Chen, Yunjin and Meng, Deyu and Zhang, Lei},
title = {Beyond a Gaussian Denoiser: Residual Learning of Deep {CNN} for Image Denoising},
year = {2017},
issue_date = {July 2017},
publisher = {IEEE Press},
volume = {26},
number = {7},
issn = {1057-7149},
url = {https://doi.org/10.1109/TIP.2017.2662206},
doi = {10.1109/TIP.2017.2662206},
journal = {Trans. Img. Proc.},
month = jul,
pages = {3142–3155},
numpages = {14}
}

@article{10762fa0005c48318dd9aef22de37a1c,
title = "FFDNet: Toward a fast and flexible solution for {CNN}-Based image denoising",
author = "Kai Zhang and Wangmeng Zuo and Lei Zhang",
year = "2018",
month = sep,
day = "1",
doi = "10.1109/TIP.2018.2839891",
language = "English",
volume = "27",
pages = "4608--4622",
journal = "IEEE Transactions on Image Processing",
issn = "1057-7149",
publisher = "IEEE",
number = "9",
}

@article{10.1007/s00530-024-01469-8,
author = {Wu, Wencong and Lv, Guannan and Duan, Yingying and Liang, Peng and Zhang, Yungang and Xia, Yuelong},
title = {Dual convolutional neural network with attention for image blind denoising},
year = {2024},
issue_date = {Oct 2024},
publisher = {Springer-Verlag},
address = {Berlin, Heidelberg},
volume = {30},
number = {5},
issn = {0942-4962},
url = {https://doi.org/10.1007/s00530-024-01469-8},
doi = {10.1007/s00530-024-01469-8},
journal = {Multimedia Syst.},
month = sep,
numpages = {20}
}

@InProceedings{10.1007/978-3-319-24574-4_28,
author="Ronneberger, Olaf
and Fischer, Philipp
and Brox, Thomas",
editor="Navab, Nassir
and Hornegger, Joachim
and Wells, William M.
and Frangi, Alejandro F.",
title="U-Net: Convolutional Networks for Biomedical Image Segmentation",
booktitle="Medical Image Computing and Computer-Assisted Intervention -- MICCAI 2015",
year="2015",
publisher="Springer International Publishing",
address="Cham",
pages="234--241",
isbn="978-3-319-24574-4"
}

@ARTICLE{6796712,
  author={Franz, Matthias O. and Schölkopf, Bernhard},
  journal={Neural Computation}, 
  title={A Unifying View of Wiener and Volterra Theory and Polynomial Kernel Regression}, 
  year={2006},
  volume={18},
  number={12},
  pages={3097-3118},
  doi={10.1162/neco.2006.18.12.3097}}

@inproceedings{NIPS2015_f3f27a32,
 author = {Alaoui, Ahmed and Mahoney, Michael W},
 booktitle = {Advances in Neural Information Processing Systems},
 editor = {C. Cortes and N. Lawrence and D. Lee and M. Sugiyama and R. Garnett},
 pages = {},
 publisher = {Curran Associates, Inc.},
 title = {Fast Randomized Kernel Ridge Regression with Statistical Guarantees},
 url = {https://proceedings.neurips.cc/paper_files/paper/2015/file/f3f27a324736617f20abbf2ffd806f6d-Paper.pdf},
 volume = {28},
 year = {2015}
}

@article{10.1016/j.inffus.2015.03.001,
author = {Sun, Shiliang and Zhao, Jing and Zhu, Jiang},
title = {A review of Nystr\"{o}m methods for large-scale machine learning},
year = {2015},
issue_date = {November 2015},
publisher = {Elsevier Science Publishers B. V.},
address = {NLD},
volume = {26},
number = {C},
issn = {1566-2535},
url = {https://doi.org/10.1016/j.inffus.2015.03.001},
doi = {10.1016/j.inffus.2015.03.001},
journal = {Inf. Fusion},
month = nov,
pages = {36–48},
numpages = {13}
}

@article{JMLR:v21:19-1031,
  author  = {Linda Chamakh and Emmanuel Gobet and Zolt\'an Szab\'o},
  title   = {Orlicz Random Fourier Features},
  journal = {Journal of Machine Learning Research},
  year    = {2020},
  volume  = {21},
  number  = {145},
  pages   = {1--37},
  url     = {http://jmlr.org/papers/v21/19-1031.html}
}

@inproceedings{NIPS2014_c6cc81e8,
 author = {Dai, Bo and Xie, Bo and He, Niao and Liang, Yingyu and Raj, Anant and Balcan, Maria-Florina and Song, Le},
 booktitle = {Advances in Neural Information Processing Systems},
 editor = {Z. Ghahramani and M. Welling and C. Cortes and N. Lawrence and K.Q. Weinberger},
 pages = {},
 publisher = {Curran Associates, Inc.},
 title = {Scalable Kernel Methods via Doubly Stochastic Gradients},
 url = {https://proceedings.neurips.cc/paper_files/paper/2014/file/c6cc81e8589ebb6accf27b78afad82d9-Paper.pdf},
 volume = {27},
 year = {2014}
}

@InProceedings{pmlr-v119-ghari20a,
  title = 	 {Online Multi-Kernel Learning with Graph-Structured Feedback},
  author =       {Ghari, Pouya M and Shen, Yanning},
  booktitle = 	 {Proceedings of the 37th International Conference on Machine Learning},
  pages = 	 {3474--3483},
  year = 	 {2020},
  editor = 	 {III, Hal Daumé and Singh, Aarti},
  volume = 	 {119},
  series = 	 {Proceedings of Machine Learning Research},
  month = 	 {13--18 Jul},
  publisher =    {PMLR},
  url = 	 {https://proceedings.mlr.press/v119/ghari20a.html}
}

@ARTICLE{8627941,
  author={Liu, Xinwang and Wang, Lei and Zhu, Xinzhong and Li, Miaomiao and Zhu, En and Liu, Tongliang and Liu, Li and Dou, Yong and Yin, Jianping},
  journal={IEEE Transactions on Pattern Analysis and Machine Intelligence}, 
  title={Absent Multiple Kernel Learning Algorithms}, 
  year={2020},
  volume={42},
  number={6},
  pages={1303-1316},
  doi={10.1109/TPAMI.2019.2895608}}

@INPROCEEDINGS{7410527,
  author={Lin, Tsung-Yu and RoyChowdhury, Aruni and Maji, Subhransu},
  booktitle={2015 IEEE International Conference on Computer Vision (ICCV)}, 
  title={Bilinear {CNN} Models for Fine-Grained Visual Recognition}, 
  year={2015},
  volume={},
  number={},
  pages={1449-1457},
  doi={10.1109/ICCV.2015.170}}

@INPROCEEDINGS{7780410,
  author={Gao, Yang and Beijbom, Oscar and Zhang, Ning and Darrell, Trevor},
  booktitle={2016 IEEE Conference on Computer Vision and Pattern Recognition (CVPR)}, 
  title={Compact Bilinear Pooling}, 
  year={2016},
  volume={},
  number={},
  pages={317-326},
  doi={10.1109/CVPR.2016.41}}

@article{10.5555/3455716.3455839,
author = {Kossaifi, Jean and Lipton, Zachary C. and Kolbeinsson, Arinbj\"{o}rn and Khanna, Aran and Furlanello, Tommaso and Anandkumar, Anima},
title = {Tensor regression networks},
year = {2020},
issue_date = {January 2020},
publisher = {JMLR.org},
volume = {21},
number = {1},
issn = {1532-4435},
journal = {J. Mach. Learn. Res.},
month = jan,
articleno = {123},
numpages = {21}
}

@INPROCEEDINGS{9156685,
  author={Chrysos, Grigorios G. and Moschoglou, Stylianos and Bouritsas, Giorgos and Panagakis, Yannis and Deng, Jiankang and Zafeiriou, Stefanos},
  booktitle={2020 IEEE/CVF Conference on Computer Vision and Pattern Recognition (CVPR)}, 
  title={P–nets: Deep Polynomial Neural Networks}, 
  year={2020},
  volume={},
  number={},
  pages={7323-7333},
  doi={10.1109/CVPR42600.2020.00735}}

@ARTICLE{9247263,
author={Banerjee, Monami and Chakraborty, Rudrasis and Bouza, Jose and Vemuri, Baba C.},
journal={ IEEE Transactions on Pattern Analysis \& Machine Intelligence },
title={{ VolterraNet: A Higher Order Convolutional Network With Group Equivariance for Homogeneous Manifolds }},
year={2022},
volume={44},
number={02},
ISSN={1939-3539},
pages={823-833},
doi={10.1109/TPAMI.2020.3035130},
url = {https://doi.ieeecomputersociety.org/10.1109/TPAMI.2020.3035130},
publisher={IEEE Computer Society},
address={Los Alamitos, CA, USA},
month=feb}

@inproceedings{dosovitskiy2021an,
  author = {Dosovitskiy, Alexey and Beyer, Lucas and Kolesnikov, Alexander and Weissenborn, Dirk and Zhai, Xiaohua and Unterthiner, Thomas and Dehghani, Mostafa and Minderer, Matthias and Heigold, Georg and Gelly, Sylvain and Uszkoreit, Jakob and Houlsby, Neil},
  booktitle = {International Conference on Learning Representations},
  title = {An Image is Worth 16x16 Words: Transformers for Image Recognition at Scale},
  url = {https://openreview.net/forum?id=YicbFdNTTy},
  year = 2021
}

@article{DBLP:journals/corr/abs-1212-0402,
  author       = {Khurram Soomro and
                  Amir Roshan Zamir and
                  Mubarak Shah},
  title        = {{UCF101:} {A} Dataset of 101 Human Actions Classes From Videos in
                  The Wild},
  journal      = {CoRR},
  volume       = {abs/1212.0402},
  year         = {2012},
  url          = {http://arxiv.org/abs/1212.0402},
  eprinttype    = {arXiv},
  eprint       = {1212.0402},
  bibsource    = {dblp computer science bibliography, https://dblp.org}
}

@INPROCEEDINGS{6126543,
  author={Kuehne, H. and Jhuang, H. and Garrote, E. and Poggio, T. and Serre, T.},
  booktitle={2011 International Conference on Computer Vision}, 
  title={HMDB: A large video database for human motion recognition}, 
  year={2011},
  volume={},
  number={},
  pages={2556-2563},
  doi={10.1109/ICCV.2011.6126543}}

@inproceedings{10.5555/648300.755324,
author = {Sch\"{o}lkopf, Bernhard and Herbrich, Ralf and Smola, Alex J.},
title = {A Generalized Representer Theorem},
year = {2001},
isbn = {3540423435},
publisher = {Springer-Verlag},
address = {Berlin, Heidelberg},
booktitle = {Proceedings of the 14th Annual Conference on Computational Learning Theory and 5th European Conference on Computational Learning Theory},
pages = {416–426},
numpages = {11},
series = {COLT '01/EuroCOLT '01}
}

@article{JMLR:v6:drineas05a,
  author  = {Petros Drineas and Michael W. Mahoney},
  title   = {On the Nystrom Method for Approximating a Gram Matrix for Improved Kernel-Based Learning},
  journal = {Journal of Machine Learning Research},
  year    = {2005},
  volume  = {6},
  number  = {72},
  pages   = {2153--2175},
  url     = {http://jmlr.org/papers/v6/drineas05a.html}
}

@book{10.7551/mitpress/4175.001.0001,
    author = {Schölkopf, Bernhard and Smola, Alexander J.},
    title = {Learning with Kernels: Support Vector Machines, Regularization, Optimization, and Beyond},
    publisher = {The MIT Press},
    year = {2001},
    month = {Dec},
    isbn = {9780262256933},
    doi = {10.7551/mitpress/4175.001.0001},
    url = {https://doi.org/10.7551/mitpress/4175.001.0001},
}

@article{JMLR:v25:21-1082,
  author  = {Siddharth Roheda and Hamid Krim and Bo Jiang},
  title   = {Volterra Neural Networks ({VNN}s)},
  journal = {Journal of Machine Learning Research},
  year    = {2024},
  volume  = {25},
  number  = {182},
  pages   = {1--29},
  url     = {http://jmlr.org/papers/v25/21-1082.html}
}

@article{Roheda_Krim_2020, title={Conquering the {CNN} Over-Parameterization Dilemma: A Volterra Filtering Approach for Action Recognition}, volume={34}, url={https://ojs.aaai.org/index.php/AAAI/article/view/6870}, DOI={10.1609/aaai.v34i07.6870}, abstractNote={&lt;p&gt;The importance of inference in Machine Learning (ML) has led to an explosive number of different proposals in ML, and particularly in Deep Learning. In an attempt to reduce the complexity of Convolutional Neural Networks, we propose a Volterra filter-inspired Network architecture. This architecture introduces controlled non-linearities in the form of interactions between the delayed input samples of data. We propose a cascaded implementation of Volterra Filtering so as to significantly reduce the number of parameters required to carry out the same classification task as that of a conventional Neural Network. We demonstrate an efficient parallel implementation of this Volterra Neural Network (VNN), along with its remarkable performance while retaining a relatively simpler and potentially more tractable structure. Furthermore, we show a rather sophisticated adaptation of this network to nonlinearly fuse the RGB (spatial) information and the Optical Flow (temporal) information of a video sequence for action recognition. The proposed approach is evaluated on UCF-101 and HMDB-51 datasets for action recognition, and is shown to outperform state of the art CNN approaches.&lt;/p&gt;}, number={07}, journal={Proceedings of the AAAI Conference on Artificial Intelligence}, author={Roheda, Siddharth and Krim, Hamid}, year={2020}, month={Apr.}, pages={11948-11956} }

@article{GHANEM2023200210,
  title={Latent Code-Based Fusion: A Volterra Neural Network Approach},
  author={Sally Ghanem and Siddharth Roheda and Hamid Krim},
  journal={Intell. Syst. Appl.},
  year={2021},
  volume={18},
  pages={200210},
  url={https://api.semanticscholar.org/CorpusID:233209691}
}

\clearpage
\onecolumn
\appendix

\section{Theoretical Properties of the Learnable Multi-Kernel Construction}
\label{app:theory_mk}

This appendix collects theoretical results that support the proposed learnable multi-kernel representation.

\begin{theorem}[Feature map representation]\label{thm:map}
Let $K:\mathbb{R}^d\times\mathbb{R}^d\to\mathbb{R}$ be a symmetric and positive semidefinite kernel. Then there exists a Hilbert space $\mathcal{F}$ and a map $\phi:\mathbb{R}^d\to\mathcal{F}$ such that
\begin{equation}
\label{eq:rkhs_featuremap_bg_appendix}
K(x,x')=\langle \phi(x),\phi(x')\rangle_{\mathcal{F}},\qquad \forall x,x'\in\mathbb{R}^d,
\end{equation}
where $\langle \cdot , \cdot \rangle_{\mathcal{F}}$ is the inner product in $\mathcal{F}$.
\end{theorem}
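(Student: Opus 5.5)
The plan is the standard Moore--Aronszajn construction: build the reproducing kernel Hilbert space of $K$ directly and take $\phi(x)=K(\cdot,x)$.

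\textbf{Step 1 (pre-Hilbert space).} Let $\mathcal{F}_0$ be the linear span of the functions $K(\cdot,x)$, $x\in\mathbb{R}^d$. For $f=\sum_i a_i K(\cdot,x_i)$ and $g=\sum_j b_j K(\cdot,y_j)$, set
\begin{equation*}
\langle f,g\rangle_0=\sum_{i,j}a_i b_j K(x_i,y_j).
\end{equation*}
This bilinear form is well defined. Writing it as $\sum_j b_j f(y_j)$ shows it depends on $f$ only as a function, not on the chosen expansion; writing it as $\sum_i a_i g(x_i)$ gives the same for $g$. Symmetry of $K$ gives symmetry of the form. Positive semidefiniteness of $K$ gives $\langle f,f\rangle_0=\sum_{i,j}a_ia_jK(x_i,x_j)\ge 0$.

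\textbf{Step 2 (definiteness --- the main obstacle).} Linearity in the second slot gives the reproducing property $\langle f,K(\cdot,x)\rangle_0=f(x)$. Cauchy--Schwarz holds for any positive semidefinite symmetric bilinear form, so
\begin{equation*}
|f(x)|^2\le \langle f,f\rangle_0\,K(x,x).
\end{equation*}
Hence $\langle f,f\rangle_0=0$ forces $f\equiv 0$, and $\langle\cdot,\cdot\rangle_0$ is a genuine inner product on $\mathcal{F}_0$.

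\textbf{Step 3 (completion).} Let $\mathcal{F}$ be the abstract Hilbert-space completion of $\mathcal{F}_0$. If one wants $\mathcal{F}$ realized as a space of functions, the inequality from Step 2 shows that Cauchy sequences converge pointwise, which gives the usual identification. For the statement as posed, however, the abstract completion already suffices.

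\textbf{Step 4 (conclusion).} Define $\phi(x)=K(\cdot,x)\in\mathcal{F}_0\subset\mathcal{F}$. The definition of $\langle\cdot,\cdot\rangle_0$ gives
\begin{equation*}
\langle\phi(x),\phi(x')\rangle_{\mathcal{F}}=K(x,x'),
\end{equation*}
which is exactly \eqref{eq:rkhs_featuremap_bg_appendix}.

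Apart from the definiteness issue handled in Step 2, the remaining verifications are routine.
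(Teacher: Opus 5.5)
Your proof is correct. The paper gives no argument of its own here and only cites Sch\"olkopf--Smola, whose standard proof is this same construction: the reproducing-kernel map $\phi(x)=K(\cdot,x)$, the inner product on the span, definiteness via the reproducing property and Cauchy--Schwarz, then completion. One small point is that rewriting the form as $\sum_j b_j f(y_j)$ uses the symmetry of $K$, which is assumed, so nothing is lost.
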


\noindent\textit{This is a standard result in RKHS theory; see, e.g., \cite{10.7551/mitpress/4175.001.0001}.}

Subsequently, we adopt the following terminology: the function $K$ is called the reproducing kernel, $K(\cdot, x)$ is referred to as a kernel atom (or representer), $\phi$ is the feature map induced by $K$, and $\mathcal{F}$ is the resulting reproducing kernel Hilbert space (RKHS).

\begin{lemma}[Polynomial kernel]\label{thm:kernel}
For any $r\in\mathbb{N}$, the kernel
\begin{equation}
\label{eq:poly_kernel_bg_appndix}
K_r(x,x') := (x^\top x')^{r}
\end{equation}
is positive semidefinite and admits a finite-dimensional feature map $\phi_r$ satisfying
$K_r(x,x')=\langle \phi_r(x),\phi_r(x')\rangle$.
\end{lemma}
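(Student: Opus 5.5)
The plan is to construct the feature map explicitly and read off positive semidefiniteness from it. Expanding the power with the multinomial theorem gives
\begin{equation*}
(x^\top x')^r=\Big(\sum_{i=1}^d x_i x'_i\Big)^r=\sum_{i_1=1}^{d}\cdots\sum_{i_r=1}^{d}\prod_{j=1}^{r}x_{i_j}\prod_{j=1}^{r}x'_{i_j}.
\end{equation*}
The natural choice is therefore $\phi_r(x)=x^{\otimes r}\in\mathbb{R}^{d^r}$, whose coordinates are indexed by tuples $(i_1,\ldots,i_r)$ and given by $\phi_r(x)_{(i_1,\ldots,i_r)}=x_{i_1}\cdots x_{i_r}$. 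With this choice the display above reads exactly $K_r(x,x')=\langle\phi_r(x),\phi_r(x')\rangle_{\mathbb{R}^{d^r}}$, and the feature space is finite-dimensional. Equivalently, one can use $\langle u^{\otimes r},v^{\otimes r}\rangle=\langle u,v\rangle^r$ for tensor products.

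If a smaller feature space is wanted, group the tuples by multi-index $\alpha$ with $|\alpha|=r$ and set $\phi_r(x)_\alpha=\sqrt{\binom{r}{\alpha}}\,x^\alpha$. This gives dimension $\binom{d+r-1}{r}$ and the same inner product. It also shows that the span of $\{K_r(\cdot,w)\}$ lies in the space of degree-$r$ homogeneous polynomials, which is the remark used in the main text.

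Positive semidefiniteness then follows directly. Take any points $x_1,\ldots,x_n$ and any $c\in\mathbb{R}^n$. Then
\begin{equation*}
\sum_{a,b}c_a c_b K_r(x_a,x_b)=\Big\|\sum_a c_a\phi_r(x_a)\Big\|^2\ge 0.
\end{equation*}
Symmetry of $K_r$ is immediate from $x^\top x'=x'^\top x$.

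The only step needing care is bookkeeping. One must check that the index set of ordered tuples reproduces the $r$-fold product exactly, so there are no missing multinomial coefficients. Alternatively, one should confirm the multinomial weights if the compressed symmetric map is used. Everything else is routine, and Theorem~\ref{thm:map} is not even needed, since the feature map is built explicitly.
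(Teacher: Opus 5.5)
Your proof is correct and takes the paper's route: expand $(x^\top x')^r$, write down an explicit finite-dimensional feature map, and read off positive semidefiniteness. The paper uses exactly your compressed map $\phi_r(x)_\alpha=\sqrt{\binom{r}{\alpha}}\,x^\alpha$ (your ordered-tuple map $x^{\otimes r}$ is an equivalent variant with no multinomial coefficients), and you spell out the Gram step $\sum_{a,b}c_ac_bK_r(x_a,x_b)=\|\sum_a c_a\phi_r(x_a)\|^2\ge 0$, which the paper only states with a closing ``therefore.''
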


\begin{proof}
For any $r\in\mathbb{N}$ and $x, x'\in\mathbb{R}^d$, the multinomial expansion gives:
\[
(x^\top x')^{r}=\Big(\sum_{k=1}^d x_k x'_k\Big)^r
=\sum_{\alpha\in\mathbb{N}^d:\ |\alpha|=r}\binom{r}{\alpha}\, x^\alpha (x')^\alpha,
\]
where $\alpha =(\alpha_1, \dots, \alpha_d)$ is a multi-index with $|\alpha| \triangleq \sum_{j=1}^d \alpha_j = r$, and the multinomial coefficient is given by
$\binom{r}{\alpha} \triangleq \frac{r!}{\alpha_1! \alpha_2! \cdots \alpha_d!}$.
The monomial term is denoted by $x^{\alpha} \triangleq x_1^{\alpha_1 } \cdots x_d^{\alpha_d}$.

Let the feature map $\phi_r(x)$ be a vector indexed by all $\alpha$ such that $|\alpha|=r$, with the component corresponding to $\alpha$ defined as
$\big(\phi_r(x)\big)_\alpha:=\sqrt{\binom{r}{\alpha}}\,x^\alpha$.
Then,
\[
\langle \phi_r(x),\phi_r(x')\rangle
=\sum_{|\alpha|=r}\binom{r}{\alpha}\, x^\alpha (x')^\alpha
=(x^\top x')^{r}=K_r(x,x').
\]
Therefore $K_r$ is positive semidefinite.
\end{proof}

\begin{theorem}[Multi-kernel validity and order-wise decomposition]
\label{lem:mk_compact_nostep}
Let $K_r(x,x')\triangleq (x^\top x')^r$ for $r=1,\ldots,p$, and let $a_r\ge 0$.
Define the multi-kernel
\begin{equation}
\label{eq:Ksum_compact_nostep}
K(x,x')\triangleq \sum_{r=1}^{p} a_r^2 K_r(x,x')
=\sum_{r=1}^{p} a_r^2 (x^\top x')^r .
\end{equation}
Then $K$ is positive semidefinite. Moreover, if $\mathcal{H}(K_r)$ denotes the RKHS induced by $K_r$, any $f\in\mathcal{H}(K)$ admits a decomposition $f=\sum_{r=1}^{p} f_r$ with $f_r\in\mathcal{H}(K_r)$.
\end{theorem}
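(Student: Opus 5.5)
Positive semidefiniteness follows from Lemma~\ref{thm:kernel} together with closure of PSD kernels under nonnegative combinations. Fix any finite set of points $x_1,\dots,x_n\in\mathbb{R}^d$ and coefficients $c\in\mathbb{R}^n$. By Lemma~\ref{thm:kernel}, each Gram matrix $G_r=[K_r(x_i,x_j)]_{i,j}$ satisfies $c^\top G_r c=\|\sum_i c_i\phi_r(x_i)\|^2\ge 0$. Since $a_r^2\ge 0$, we get $c^\top G c=\sum_r a_r^2\, c^\top G_r c\ge 0$, where $G=\sum_r a_r^2 G_r$ is the Gram matrix of $K$. Symmetry of $K$ is immediate from symmetry of $x^\top x'$. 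Hence $K$ is positive semidefinite.

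For the decomposition I would build an explicit feature map for $K$ by concatenating the order-wise maps. Define
\[
\Phi(x)\triangleq \bigl(a_1\phi_1(x),\,a_2\phi_2(x),\,\dots,\,a_p\phi_p(x)\bigr)\in\mathcal{F}_1\oplus\cdots\oplus\mathcal{F}_p ,
\]
where $\mathcal{F}_r$ is the finite-dimensional space indexed by multi-indices with $|\alpha|=r$. Then $\langle\Phi(x),\Phi(x')\rangle=\sum_r a_r^2K_r(x,x')=K(x,x')$. The standard characterization of an RKHS through a feature map (the construction behind Theorem~\ref{thm:map}) gives
\[
\mathcal{H}(K)=\{x\mapsto\langle v,\Phi(x)\rangle : v\in\oplus_r\mathcal{F}_r\}.
\]
Given $f\in\mathcal{H}(K)$, I choose a representing $v=(v_1,\dots,v_p)$ and set $f_r(x)\triangleq a_r\langle v_r,\phi_r(x)\rangle$. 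Then $f=\sum_r f_r$ holds by linearity. Because $f_r(x)=\langle a_rv_r,\phi_r(x)\rangle$, each $f_r$ lies in $\mathcal{H}(K_r)=\{x\mapsto\langle u,\phi_r(x)\rangle\}$; when $a_r=0$ we simply have $f_r=0\in\mathcal{H}(K_r)$.

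The only step needing care is the identification of an RKHS with the range of $v\mapsto\langle v,\Phi(\cdot)\rangle$. This is a standard fact (Aronszajn's theorem on sums of kernels, or the feature-map characterization in \cite{10.7551/mitpress/4175.001.0001}), and I would cite it rather than reprove it. As a fallback that avoids the citation, I would argue concretely. Each $\mathcal{H}(K_r)$ is the space of homogeneous degree-$r$ polynomials (span of $x^\alpha$, $|\alpha|=r$), since $\phi_r$ has all monomials of degree $r$ as coordinates with nonzero weights. Moreover, $\mathcal{H}(K)$ is spanned by the atoms $K(\cdot,x')=\sum_r a_r^2K_r(\cdot,x')$. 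It is finite-dimensional, hence closed, so the span needs no completion. Each atom splits into homogeneous pieces $a_r^2K_r(\cdot,x')\in\mathcal{H}(K_r)$, and this splitting extends linearly to every $f\in\mathcal{H}(K)$. Uniqueness is not claimed, though it in fact holds here, since polynomials of distinct homogeneous degrees are linearly independent.
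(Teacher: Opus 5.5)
Your proposal is correct and follows essentially the same route as the paper: the concatenated feature map $\Phi(x)=(a_1\phi_1(x),\dots,a_p\phi_p(x))$, the feature-space description of $\mathcal{H}(K)$, and the blockwise split $f_r=\langle a_r v_r,\phi_r(\cdot)\rangle$. The only differences are cosmetic: you get positive semidefiniteness from Gram-matrix closure under nonnegative combinations rather than reading it off $\langle\Phi(x),\Phi(x')\rangle$ directly, and your finite-dimensional fallback proves the RKHS characterization that the paper simply invokes.
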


\begin{proof}
For each $r$, by Lemma~\ref{thm:kernel}, the kernel $K_r$ admits a feature map $\phi_r:\mathbb{R}^d\to\mathcal{F}_r$.
Consider the direct-sum feature space $\mathcal{F}\triangleq \mathcal{F}_1\oplus\cdots\oplus\mathcal{F}_p$ and define the concatenated feature map $\phi(x)\in \mathcal{F}$ as:
\begin{equation}
\label{eq:feat_concat_nostep}
\phi(x)\triangleq \big(a_1\phi_1(x),\,\ldots,\,a_p\phi_p(x)\big).
\end{equation}
The kernel induced by $\phi$ is
\[
\langle \phi(x),\phi(x')\rangle_{\mathcal{F}}
=\sum_{r=1}^{p} a_r^2 \langle \phi_r(x),\phi_r(x')\rangle_{\mathcal{F}_r}
=K(x,x'),
\]
which proves that $K$ is positive semidefinite.

Let $f\in\mathcal{H}(K)$. By the feature-space representation of the RKHS induced by $\phi$, there exists a vector
$w = (w_1, \dots, w_p) \in \mathcal{F}$ such that $f(x)=\langle w,\phi(x)\rangle_{\mathcal{F}}$.
Substituting~\eqref{eq:feat_concat_nostep} yields
\[
f(x)=\sum_{r=1}^{p} \langle w_r, a_r\phi_r(x)\rangle_{\mathcal{F}_r}
=\sum_{r=1}^{p} f_r(x),
\]
where we define $f_r(x)\triangleq \langle a_r w_r,\phi_r(x)\rangle_{\mathcal{F}_r}$. Since $\phi_r$ is the feature map for $K_r$, it follows that $f_r\in\mathcal{H}(K_r)$.
\end{proof}

\begin{definition}
The function $f:\mathbb{R}^d\to\mathbb{R}$ defined above, ${\displaystyle f(x)=\sum_{r=1}^{p} f_r(x)}$, is called the $p$-th order truncated Volterra mapping.
\end{definition}
\begin{definition}
A homogeneous polynomial of degree $r$ is a polynomial
$p(x)=p(x_1,\cdots,x_d)$
such that $$p(\lambda x)=\lambda^r p(x), \quad \forall \lambda \in \mathbb{R}, \ x \in \mathbb{R}^d.$$
Specifically, every monomial term has total degree exactly $r$.
That is, each term of the homogeneous polynomial takes the form:
$$c x^{\alpha} =c x_1^{\alpha_1} x_2^{\alpha_2}\cdots x_d^{\alpha_d}$$
with $ |\alpha|=\alpha_1+\alpha_2+\cdots+\alpha_d=r$.
\end{definition}

\begin{theorem}[Finite atomic representation in polynomial linear spaces]
\label{thm:finite_atomic_rep}
Let $f:\mathbb{R}^d\to\mathbb{R}$ be a $p$-th order truncated Volterra mapping. Viewing the space of degree-$r$ homogeneous polynomials as a finite-dimensional linear space, each component $f_r$ admits an exact representation using a finite set of kernel atoms. Specifically, for each order $r\in\{1,\ldots,p\}$, there exists a finite integer $M_r$ along with parameters $\{\gamma_{r,i}\}_{i=1}^{M_r}\subset\mathbb{R}$ and $\{w_{r,i}\}_{i=1}^{M_r}\subset\mathbb{R}^d$ such that
\begin{equation}
\label{eq:fr_span}
f_r(x)=\sum_{i=1}^{M_r}\gamma_{r,i}\,(x^\top w_{r,i})^r .
\end{equation}
Consequently, the entire function $f$ can be exactly represented as a multi-kernel expansion:
\begin{equation}
\label{eq:f_total_span}
f(x)=\sum_{r=1}^{p}\sum_{i=1}^{M_r}\gamma_{r,i}\,(x^\top w_{r,i})^r .
\end{equation}
\end{theorem}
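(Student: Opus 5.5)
The plan is to show that the powers of linear forms $\{(x^\top w)^r : w\in\mathbb{R}^d\}$ span the whole space $\mathcal{P}_r$ of degree-$r$ homogeneous polynomials on $\mathbb{R}^d$. Since $\dim\mathcal{P}_r=\binom{d+r-1}{r}<\infty$, we can then pick $M_r\le\dim\mathcal{P}_r$ atoms forming a basis and expand $f_r$ in it. Summing over $r$ gives \eqref{eq:f_total_span} immediately, so all the work is in the order-wise statement \eqref{eq:fr_span}.

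We know $f_r\in\mathcal{P}_r$: by definition $f_r(x)=\sum h_r(i_1,\ldots,i_r)x_{i_1}\cdots x_{i_r}$, and collecting equal multi-indices writes $f_r=\sum_{|\alpha|=r}c_\alpha x^\alpha$. Let $S_r=\mathrm{span}\{(x^\top w)^r: w\in\mathbb{R}^d\}\subseteq\mathcal{P}_r$. By the multinomial expansion in the proof of Lemma~\ref{thm:kernel},
\[
(x^\top w)^r=\sum_{|\alpha|=r}\binom{r}{\alpha}w^\alpha x^\alpha=\langle \phi_r(w),\phi_r(x)\rangle .
\]
So in the monomial basis the atom indexed by $w$ has coordinate vector $\big(\binom{r}{\alpha}w^\alpha\big)_{|\alpha|=r}$.

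To show $S_r=\mathcal{P}_r$, I would use a duality argument, which is the main step. Suppose a linear functional on $\mathcal{P}_r$, given by coefficients $(\lambda_\alpha)_{|\alpha|=r}$ acting on monomial coordinates, annihilates every atom. Then
\[
q(w):=\sum_{|\alpha|=r}\lambda_\alpha\binom{r}{\alpha}w^\alpha=0\quad\text{for all } w\in\mathbb{R}^d .
\]
A real polynomial that vanishes identically on $\mathbb{R}^d$ is the zero polynomial; this follows by induction on $d$ using that a nonzero univariate polynomial has finitely many roots. Hence $\lambda_\alpha\binom{r}{\alpha}=0$, and since the multinomial coefficients are positive, $\lambda_\alpha=0$ for all $\alpha$. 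Thus only the zero functional annihilates $S_r$, so $S_r=\mathcal{P}_r$ by finite-dimensional duality.

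As a concrete alternative, one can recover each monomial directly. For example, $x_1x_2=\tfrac14\big((x_1+x_2)^2-(x_1-x_2)^2\big)$, and in general one can use the polarization identity
\[
r!\,\prod_{j=1}^r (x^\top v_j)=\sum_{\epsilon\in\{0,1\}^r}(-1)^{r-|\epsilon|}\Big(x^\top\sum_j\epsilon_j v_j\Big)^r .
\]
Taking $v_j=e_{i_j}$ expresses each monomial $x_{i_1}\cdots x_{i_r}$ as at most $2^r$ atoms. This gives an explicit, though non-minimal, $M_r$.

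Finally, since $S_r=\mathcal{P}_r$ is finite-dimensional and spanned by the atoms, some finite subfamily $\{(x^\top w_{r,i})^r\}_{i=1}^{M_r}$ with $M_r=\dim\mathcal{P}_r$ is a basis. Writing $f_r$ in this basis gives the coefficients $\gamma_{r,i}$. For $r=1$ the claim is trivial: $f_1(x)=x^\top h_1$, so $M_1=1$ with $w_{1,1}=h_1$ and $\gamma_{1,1}=1$. The only delicate point is the spanning step, which the polynomial-identity argument (or polarization) settles. It uses that the field $\mathbb{R}$ is infinite and has characteristic zero, which is exactly why the multinomial coefficients and $r!$ are invertible.
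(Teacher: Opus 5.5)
Your proof is correct and has the same skeleton as the paper's. Both note that the space of degree-$r$ homogeneous polynomials is finite-dimensional, that the atoms $(x^\top w)^r$ span it, that a finite spanning subfamily can be extracted, and that summing over orders finishes the argument.

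The difference lies in how you justify the spanning claim, which is the only step with real content. The paper identifies the space with symmetric $r$-tensors, sends $(x^\top w)^r$ to $w^{\otimes r}$, and invokes the fact that rank-one tensors span the full tensor space. Read literally, that fact is about general rank-one tensors $u_1\otimes\cdots\otimes u_r$, which correspond to products $\prod_j (x^\top u_j)$ of possibly distinct linear forms. Passing from those products to pure powers $w^{\otimes r}$ is exactly a polarization step, and the paper does not spell it out.

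You supply that step in a self-contained, elementary way, in two ways:
\begin{itemize}
\item Your annihilator argument uses only that a polynomial in $w$ vanishing on all of $\mathbb{R}^d$ is zero, together with $\binom{r}{\alpha}>0$.
\item Your identity $r!\prod_j (x^\top v_j)=\sum_{\epsilon}(-1)^{r-|\epsilon|}\bigl(x^\top\sum_j\epsilon_j v_j\bigr)^r$ gives the same conclusion constructively.
\end{itemize}
Both make visible where characteristic zero and the infinitude of $\mathbb{R}$ are used. Either argument yields $M_r\le\binom{d+r-1}{r}$ once a basis is extracted. Polarization additionally produces explicit atoms, at most $2^r$ per monomial.

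The paper's tensor viewpoint is more compact. It also ties the minimal $M_r$ for a given $f_r$ to its symmetric (Waring) rank, which is the natural quantity behind the paper's remark about learning a compact set of centers.
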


\begin{proof}
The proof proceeds by analyzing the linear structure of the function space. Let $\mathcal{V}_r$ denote the vector space of all homogeneous polynomials of degree $r$ on $\mathbb{R}^d$. It is well-established that $\mathcal{V}_r$ is a linear space of finite dimension.

Consider the family of kernel atoms defined by the power functions $\phi_w(x) \triangleq (x^\top w)^r$, parameterized by $w \in \mathbb{R}^d$. Each $\phi_w$ is an element of $\mathcal{V}_r$. Moreover, by the multinomial expansion,
\[
(x^\top w)^r = \Big(\sum_{j=1}^{d} w_j x_j\Big)^r
\]
is a degree-$r$ homogeneous polynomial in $x$.

This family of atoms spans $\mathcal{V}_r$. Indeed, identify $\mathcal{V}_r$ with the space of symmetric $r$-tensors on $\mathbb{R}^d$; then $(x^\top w)^r$ corresponds to the rank-one symmetric tensor $w^{\otimes r}$, and rank-one tensors span the full tensor space. Hence $\mathrm{span}\{\phi_w:w\in\mathbb{R}^d\}=\mathcal{V}_r$.
Since $\mathcal{V}_r$ is finite-dimensional, there exist $w_{r,1},\ldots,w_{r,M_r}\in\mathbb{R}^d$ such that
$\mathrm{span}\{(x^\top w_{r,i})^r\}_{i=1}^{M_r}=\mathcal{V}_r$.
Therefore, for any $f_r\in\mathcal{V}_r$, there exist coefficients $\{\gamma_{r,i}\}_{i=1}^{M_r}$ such that \eqref{eq:fr_span} holds.
Summing these components over $r=1,\ldots,p$ yields \eqref{eq:f_total_span}.
\end{proof}

\begin{remark}
This theorem provides the theoretical guarantee that our learnable-kernel formulation is sufficiently expressive to represent any \emph{truncated} (finite-order) Volterra mapping using a finite number of parameters. By treating the centers $\{w_{r,i}\}$ as learnable parameters, the model can adaptively discover a compact set of atoms to represent the target function without relying on a pre-fixed basis.
\end{remark}

\end{document}